\documentclass[11pt]{article}

\usepackage[margin=1in]{geometry}
\usepackage{amsmath,amssymb,amsthm}
\usepackage{graphicx}
\usepackage{booktabs}
\usepackage[round,authoryear]{natbib}
\usepackage{xcolor}
\usepackage{microtype}
\usepackage[colorlinks=true,linkcolor=blue,citecolor=blue,urlcolor=blue]{hyperref}
\graphicspath{{figures/}}
\newtheorem{proposition}{Proposition}
\hypersetup{
  pdftitle={Curvature Under Attack in hZACH-ViT: Gauge Symmetry, Boundary Saturation, and Adversarial Failure},
  pdfauthor={Athanasios Angelakis; Marta Gomez-Barrero},
  pdfsubject={Accepted at NeurReps 2026, NeurIPS 2026},
  pdfkeywords={ZACH-ViT, hZACH-ViT, geometric deep learning, hyperbolic representations, adversarial robustness, curvature, medical imaging}
}

\title{Curvature Under Attack in hZACH-ViT:\\
Gauge Symmetry, Boundary Saturation, and Adversarial Failure}

\author{
Athanasios Angelakis\textsuperscript{1,2,*}
\quad and \quad
Marta Gomez-Barrero\textsuperscript{1,\(\dagger\)}\\[0.75em]
\small \textsuperscript{1}BioML, Research Institute CODE, University of the Bundeswehr Munich, Munich, Germany\\
\small \textsuperscript{2}Amsterdam UMC, University of Amsterdam, Amsterdam, Netherlands\\[0.45em]
\footnotesize \textsuperscript{*}\href{https://orcid.org/0000-0003-1226-9560}{ORCID: 0000-0003-1226-9560}
\quad
\textsuperscript{\(\dagger\)}\href{https://orcid.org/0000-0003-4581-5353}{ORCID: 0000-0003-4581-5353}
}

\date{\small Accepted at NeurReps 2026: Symmetry and Geometry in Neural Representations, NeurIPS 2026}

\begin{document}
\maketitle

\begin{abstract}
Curvature is often treated as an intrinsic property of a representation, although its empirical effect also depends on coordinate scale, learned logit temperature, and numerical safeguards. We study this interaction in hZACH-ViT, a compact Vision Transformer with Euclidean, Poincar\'e, and spherical prototype heads. The backbone architecture, seed-specific initialization, 50-per-class training subset, and optimization protocol are matched across three MedMNIST datasets and five seeds. At the fixed comparison curvature $c=1$, Poincar\'e has the lowest class-macro PGD attack-success rate in all 12 dataset-budget cells and under a stronger CE+DLR multi-restart attack on all three datasets, but it also has the lowest clean MacroF1. An end-to-end curvature intervention changes the interpretation. Reducing Poincar\'e curvature to $c=0.1$ improves clean MacroF1 in every one of the 15 paired seed-dataset comparisons and removes hard boundary clipping, yet on OrganAMNIST it increases strong attack success from $89.7\%$ to $99.3\%$ (paired difference $+9.57$ points; 95\% hierarchical bootstrap CI $[+5.52,+14.03]$). At $c=1$, $40$-$47\%$ of clean Poincar\'e features are hard-clipped, the radial Jacobian of the inherited map is nearly zero, and dimensionless attack trajectories are unusually long and inefficient. The spherical head provides a control: its curvature change is an exact scale gauge to floating-point precision and produces much smaller attack differences. These results do not establish intrinsic hyperbolic robustness. They identify an implementation-sensitive regime in which curvature, scale, and proximity to the Poincar\'e boundary jointly organize clean recognition and adversarial representation motion.
\end{abstract}
\noindent\textbf{Keywords:}
ZACH-ViT, hZACH-ViT, geometric deep learning, hyperbolic representations,
adversarial robustness, curvature, low-data medical imaging

\section{Introduction}
Non-Euclidean representation learning replaces a usually implicit assumption - that useful features live in a flat vector space - with an explicit geometric design choice. Hyperbolic spaces provide exponentially expanding volume for hierarchical organization, while spherical representations emphasize angular structure and remove radial scale~\citep{nickel2017poincare,ganea2018hyperbolic,khrulkov2020hyperbolic,mettes2019hyperspherical}. These ideas have become established components of geometric deep learning and computer vision~\citep{bronstein2017geometric,ermolov2022hyperbolic,mettes2024survey}. Yet a trained curved model is not determined by curvature sign alone. Curvature magnitude sets a coordinate radius; a learned temperature rescales metric logits; and practical Poincar\'e implementations clamp points and denominators near the boundary. Consequently, a nominally geometric observation can reflect an interaction between intrinsic geometry, representation adaptation, and finite-precision computation.

This distinction is especially important under perturbation. Adversarial optimization provides a controlled way to interrogate which representation directions remain available and how a trajectory reaches a decision boundary. It is not, in our use, a claim about clinical security. Prior work has shown that robustness can depend on learned feature organization~\citep{ilyas2019features,tsipras2019robustness,dapello2021geometry}, while recent work has begun to formulate attacks directly for hyperbolic networks~\citep{vanspengler2024attacks}. At the same time, Poincar\'e learning is known to face precision and optimization difficulties near its boundary~\citep{sala2018tradeoffs,mishne2023stability}. High-dimensional hyperbolic prototype models can concentrate at a common near-boundary radius, sometimes erasing the practical distinction from fixed-radius Euclidean embeddings~\citep{moreira2024twosides}; clipping has therefore been used deliberately to regulate hyperbolic feature norms~\citep{guo2022clipped}. These observations motivate a more careful question than whether negative curvature is ``robust'': \emph{which part of an apparent attack slowdown is preserved when curvature scale and boundary occupancy are changed?}

We answer this question in hZACH-ViT, the curved-latent extension of the compact, permutation-invariant ZACH-ViT backbone~\citep{angelakis2026zach,angelakis2026hzach}. ZACH-ViT removes positional embeddings and the dedicated class token, uses global average pooling over patch representations, and contains approximately $0.25$M trainable parameters. hZACH-ViT retains this backbone and replaces the terminal classifier with geometry-aware prototypes. The earlier hZACH-ViT study treated geometry and curvature as dataset-dependent clean model-selection variables and found that low curvature was frequently preferred. The present work instead fixes the training and attack protocol, first comparing curvature signs at $c=1$ and then intervening within the Poincar\'e and spherical families by retraining at $c=0.1$.

The resulting contribution is fourfold. First, we provide a corrected class-macro attack comparison between linear, Euclidean-prototype, Poincar\'e, and spherical hZACH-ViT heads on identical shared-clean-correct samples. Second, we derive the curvature-scale gauge for both curved heads and use the spherical family as an exact numerical control. Third, we connect the Poincar\'e curvature intervention to normalized boundary occupancy, hard clipping, radial sensitivity, learned logit scale, and dimensionless attack trajectories. Fourth, we validate the central ordering with 20-step, three-restart CE and DLR attacks, per-sample worst-margin selection, convergence traces, and gradient diagnostics. The evidence supports an implementation-sensitive boundary regime, not a universal robustness property of negative curvature.

\section{Related Work and Positioning}
Prototypical networks classify by metric comparison with class representatives~\citep{snell2017prototypical}. Hyperbolic embeddings extend this construction to hierarchical structure~\citep{nickel2017poincare,nickel2018lorentz,ganea2018hyperbolic}, including image embeddings and Vision Transformers~\citep{khrulkov2020hyperbolic,ermolov2022hyperbolic}; hyperspherical classifiers instead remove feature magnitude and organize decisions through angles~\citep{liu2017sphereface,mettes2019hyperspherical}. The practical meaning of curvature nevertheless depends on parameterization and precision~\citep{sala2018tradeoffs,mishne2023stability}. Poincar\'e prototypes can concentrate with features at a common near-boundary radius~\citep{moreira2024twosides}, while explicit radial clipping can change classification behavior~\citep{guo2022clipped}. We therefore study curvature through normalized coordinates, hard projection, and radial sensitivity rather than treating its sign as a complete model description.

Representation geometry can constrain accessible computation in biological and artificial systems~\citep{sadtler2014constraints,chaudhuri2019attractor,gardner2022toroidal,chung2021neural}. Under perturbation, feature organization determines which non-robust directions are available~\citep{ilyas2019features,dapello2021geometry}, and direct attacks on hyperbolic networks show that evaluation itself can depend on geometry~\citep{vanspengler2024attacks}. A CVPR 2026 PHAROS workshop study evaluated common corruptions and first-order adversarial attacks for the flat ZACH-ViT backbone~\citep{angelakis2026pharos}, while the clean hZACH-ViT study examined geometry-aware heads as model-selection alternatives~\citep{angelakis2026hzach}. Neither work separated curvature sign from curvature scale, audited the curvature-scale gauge, or analyzed adversarial motion in dimensionless curved coordinates. Here attacks serve as representation probes, not evidence of deployment security.

\section{Geometry, Gauge, and Boundary Coordinates}
\subsection{Matched prototype heads}
Let $u=f_\theta(x)\in\mathbb{R}^{64}$ denote the pooled ZACH-ViT feature. A prototype head learns one preimage $p_k\in\mathbb{R}^{64}$ per class and forms
\begin{equation}
 \ell_k(u)=-s\,d_{\mathcal M,c}\!\left(\Phi_{\mathcal M,c}(u),\Phi_{\mathcal M,c}(p_k)\right),
 \label{eq:logit}
\end{equation}
where $s>0$ is learned. The Euclidean prototype head uses ordinary distance. For the Poincar\'e ball $\mathbb D_c^d=\{z:c\|z\|^2<1\}$, the inherited hZACH-ViT map is an origin exponential map followed by the canonical numerical projection into the open ball. For the sphere of radius $1/\sqrt c$,
\begin{equation}
 \Phi_{S,c}(u)=\frac{u}{\sqrt c\,\|u\|_2},\qquad
 d_{S,c}(x,y)=\frac{1}{\sqrt c}\arccos\!\left(c\langle x,y\rangle\right).
 \label{eq:sphere}
\end{equation}
All prototype variants have 248,705 active-forward parameters, compared with 248,712 for the original linear head.

\subsection{Two scale symmetries}
Positive dilation is removed exactly by spherical normalization: $\Phi_{S,c}(\alpha u)=\Phi_{S,c}(u)$ for every $\alpha>0$. For $u\ne0$, define $L(u)=\mathcal L(h(\Phi_{S,c}(u)),y)$. This gives an immediate infinitesimal constraint.
\begin{proposition}[Radial orthogonality]
For any differentiable $L$, the feature gradient satisfies
\begin{equation}
 u^\top\nabla_u L=0.
 \label{eq:orthogonal}
\end{equation}
\end{proposition}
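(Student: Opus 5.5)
The plan is to differentiate the exact scale invariance $\Phi_{S,c}(\alpha u)=\Phi_{S,c}(u)$ along the ray through $u$ and then apply the chain rule. Fix $u\neq 0$ and define $g(\alpha)=L(\alpha u)$ for $\alpha>0$. Since $\alpha u\neq 0$ for every such $\alpha$, the map $\Phi_{S,c}$ is smooth on a neighbourhood of the ray. The invariance gives $L(\alpha u)=\mathcal L(h(\Phi_{S,c}(\alpha u)),y)=\mathcal L(h(\Phi_{S,c}(u)),y)=L(u)$, so $g$ is constant on $(0,\infty)$ and therefore $g'(1)=0$.

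Next, I compute $g'(1)$ directly. Because $L$ is differentiable at $u$, the chain rule along the curve $\alpha\mapsto\alpha u$, whose velocity at $\alpha=1$ is $u$, gives
\begin{equation*}
g'(1)=\nabla_u L(u)^\top u .
\end{equation*}
Combining this with $g'(1)=0$ yields \eqref{eq:orthogonal}. Equivalently, the directional derivative of $L$ at $u$ in the radial direction $u$ vanishes.

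As an independent check, and to show that the orthogonality is already built into the normalization map, I would also compute the Jacobian of $\Phi_{S,c}$ explicitly:
\begin{equation*}
D\Phi_{S,c}(u)=\frac{1}{\sqrt c\,\|u\|}\left(I-\frac{uu^\top}{\|u\|^2}\right).
\end{equation*}
This matrix annihilates $u$. So for any differentiable downstream function $G=\mathcal L\circ h$, we get $\nabla_u L=D\Phi_{S,c}(u)^\top\nabla G$, and $D\Phi_{S,c}(u)^\top=D\Phi_{S,c}(u)$ is the tangential projector scaled by $1/(\sqrt c\,\|u\|)$. Hence $\nabla_u L$ lies in $u^\perp$.

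There is no real obstacle in this argument. The only point requiring care is that the hypothesis $u\neq 0$ is essential, since $\Phi_{S,c}$ is undefined at the origin. A smaller caveat concerns the arccos in $d_{S,c}$: it is not differentiable at antipodal or coincident points. The statement assumes that $L$ is differentiable at $u$, and that assumption is exactly what the argument needs. No differentiability of $h$ at boundary cases is required beyond that hypothesis.
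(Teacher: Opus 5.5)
Your proof is correct and is the paper's argument. The paper likewise differentiates the invariance $L(\alpha u)=L(u)$ with respect to $\alpha$ at $\alpha=1$; your auxiliary function $g$ and the chain rule just make that step explicit. The explicit Jacobian $D\Phi_{S,c}(u)=\frac{1}{\sqrt c\,\|u\|}\bigl(I-uu^\top/\|u\|^2\bigr)$ is a valid extra check but is not needed, and it assumes slightly more than the stated hypothesis: that $\mathcal L\circ h$ is differentiable in ambient coordinates at $\Phi_{S,c}(u)$.
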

\begin{proof}
$L(\alpha u)=L(u)$ for all $\alpha>0$. Differentiating with respect to $\alpha$ at $\alpha=1$ yields Eq.~\eqref{eq:orthogonal}.
\end{proof}

Curvature magnitude introduces a second, model-level gauge. If $c$ is changed to $c'$, spherical logits are preserved by $s'=s\sqrt{c'/c}$. Poincar\'e logits have the corresponding real-arithmetic covariance
\begin{equation}
 u'=\sqrt{c/c'}\,u,\quad p_k'=\sqrt{c/c'}\,p_k,\quad s'=s\sqrt{c'/c}.
 \label{eq:gauge}
\end{equation}
Equivalently, the normalized radius $\rho_c(z)=\sqrt c\|z\|$ and the dimensionless metric $\bar d_c=\sqrt c\,d_c$ remove the global radius scale. Equation~\eqref{eq:gauge} is a reparameterization statement, not a claim that independently trained checkpoints are equivalent. Moreover, the implemented Poincar\'e map includes float32 arithmetic, hard projection at $\rho_c=1-10^{-7}$, and stabilized denominators. Those operations can break the covariance precisely where features approach the boundary.

We therefore record four boundary-sensitive quantities: normalized radius $\rho_c$, boundary slack $1-\rho_c$, the radial Jacobian of the feature-to-ball map, and the fraction of samples reaching the hard projection threshold. For a representation trajectory $z_0,\ldots,z_T$, we additionally measure
\begin{equation}
 \bar L_c=\sum_{t=0}^{T-1}\sqrt c\,d_c(z_t,z_{t+1}),\qquad
 \bar D_c=\sqrt c\,d_c(z_0,z_T),\qquad
 \eta_c=\bar D_c/\bar L_c.
 \label{eq:efficiency}
\end{equation}
The ratio $\eta_c\in[0,1]$ is dimensionless; lower values indicate a less direct path under the same normalized metric.

\section{Experimental Protocol}
\paragraph{Datasets and low-data training.}
We use BloodMNIST (8 classes), DermaMNIST (7), and OrganAMNIST (11) from MedMNIST v2~\citep{yang2023medmnist}. Following the ZACH-ViT protocol, each seed uses 50 training images per class, 224$\times$224 RGB input, patch size 16, batch size 16, Adam at $10^{-4}$ without weight decay, and 47 epochs. We use seeds $\{3,5,7,11,13\}$ and select by validation MacroF1 (validation loss breaks ties) before accessing the official test set. The $c=1$ study reuses 60 selected checkpoints. We train 30 missing end-to-end $c=0.1$ Poincar\'e and spherical models with exactly matched training indices and order.

\paragraph{Shared-clean-correct evaluation.}
ASR is conditioned on clean correctness. The $c=1$ pool intersects images correctly classified by all four heads; curvature comparisons use a separate within-family intersection for $(c=1,c=0.1)$. Core and strong subsets retain up to 64 and 16 images per eligible class. We compute class-macro ASR within seed and then average the five seeds:

For seed $r$, let $C_r$ be the classes represented in the relevant shared pool and $I_{r,k}$ the selected clean-correct images of class $k$. We report
\begin{equation}
 \widehat{\mathrm{ASR}}_r=\frac{1}{|C_r|}\sum_{k\in C_r}\frac{1}{|I_{r,k}|}
 \sum_{i\in I_{r,k}}\mathbf 1\!\left[h(x_i^{\mathrm{adv}})\ne k\right],
 \qquad \widehat{\mathrm{ASR}}=\frac{1}{5}\sum_r\widehat{\mathrm{ASR}}_r.
 \label{eq:asr}
\end{equation}
Curvature effects are paired differences on the same images. Intervals use 3,000 hierarchical percentile-bootstrap draws that resample seeds and then images within class. Clean MacroF1 uses the full test set; coverage is detailed in Appendix~\ref{app:coverage}.

\paragraph{Attacks and diagnostics.}
Core attacks use untargeted, clean-start PGD-10 with cross-entropy, $\epsilon\in\{1,2,4,8\}/255$, and step size $\epsilon/4$~\citep{goodfellow2015explaining,madry2018towards}. Strong validation at $4/255$ uses PGD-20 with three random starts each for CE and DLR; the smallest true-class margin among all six candidates is retained. We audit convergence, input gradients, and selected losses. Trajectories use up to eight images per class and Eq.~\eqref{eq:efficiency}. These are empirical probes, not certificates~\citep{athalye2018obfuscated,croce2020reliable}.

\section{Results}
\subsection{At c=1, Poincar\'e fails more slowly but recognizes less well}
Figure~\ref{fig:c1} and Table~\ref{tab:c1} give the corrected class-macro comparison. Poincar\'e has the lowest point-estimate PGD-10 ASR in all 12 dataset-budget cells. The separation is largest on OrganAMNIST: at $4/255$, Poincar\'e is $8.05$ points below the spherical head, $8.52$ below the Euclidean prototype, and $9.37$ below the linear head; all three paired hierarchical intervals exclude zero. Under the stronger attack, Poincar\'e again has the lowest point estimate on every dataset. The OrganAMNIST differences remain $-9.38$ points relative to sphere (95\% CI $[-14.66,-4.59]$), $-8.78$ relative to Euclidean prototype ($[-15.07,-2.36]$), and $-9.97$ relative to linear ($[-15.19,-5.06]$). BloodMNIST and DermaMNIST show smaller or interval-uncertain pairwise differences.

\begin{figure}[htbp]
\centering
\includegraphics[width=\linewidth]{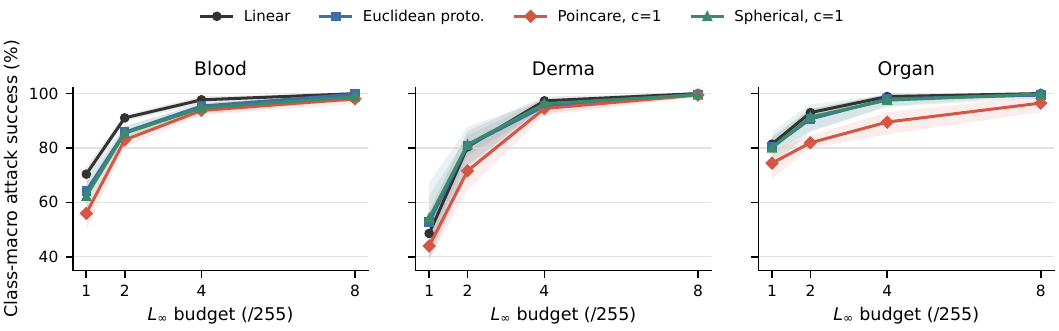}
\caption{PGD-10 class-macro ASR for the fixed $c=1$ comparison. Curves are five-seed means and shaded regions are hierarchical 95\% bootstrap intervals. Every head is attacked on the identical four-head shared-clean-correct subset within each seed. Lower ASR means slower empirical failure under this attack, not certified robustness.}
\label{fig:c1}
\end{figure}

\begin{table}[htbp]
\centering
\caption{Clean MacroF1 and ASR (\%) at $4/255$, averaged over five seeds. PGD-10 and strong attacks use independently capped versions of the same four-head shared-clean-correct pool. Bold marks the clean-best or attack-slowest head within dataset.}
\label{tab:c1}
\scriptsize
\begin{tabular}{llrrr}
\toprule
Dataset & Head & Clean F1 & PGD-10 & PGD-20/R3 CE+DLR\\
\midrule
Blood & Linear & \textbf{.805} & 97.70 & 96.88\\
& Euclidean prototype & .783 & 95.39 & 96.41\\
& Poincar\'e, $c=1$ & .714 & \textbf{93.79} & \textbf{93.59}\\
& Spherical, $c=1$ & .770 & 94.73 & 94.84\\
\midrule
Derma & Linear & \textbf{.354} & 97.31 & 98.33\\
& Euclidean prototype & .340 & 95.36 & 97.71\\
& Poincar\'e, $c=1$ & .317 & \textbf{94.50} & \textbf{96.25}\\
& Spherical, $c=1$ & .338 & 96.27 & 98.33\\
\midrule
Organ & Linear & .506 & 98.88 & 99.32\\
& Euclidean prototype & \textbf{.518} & 98.03 & 98.13\\
& Poincar\'e, $c=1$ & .405 & \textbf{89.51} & \textbf{89.35}\\
& Spherical, $c=1$ & .489 & 97.57 & 98.72\\
\bottomrule
\end{tabular}
\end{table}

This ordering is accompanied by a clean-performance cost. Poincar\'e has the lowest clean MacroF1 on all three datasets. The observation is therefore not a free improvement in robustness: at $c=1$, the head that is hardest for these attacks is also the least useful clean classifier.

\subsection{The curvature intervention removes the clean penalty and the Organ attack slowdown}
Retraining at $c=0.1$ produces a consistent Poincar\'e clean improvement (Fig.~\ref{fig:mechanism}a). MacroF1 increases by $+7.69$, $+3.34$, and $+9.95$ percentage points on Blood, Derma, and OrganAMNIST, respectively; all 15 within-seed differences are positive. The improvement is not achieved by preserving the $c=1$ attack ordering. On OrganAMNIST, strong ASR rises from $89.70\%$ to $99.27\%$ (difference $+9.57$ points, 95\% CI $[+5.52,+14.03]$), with the same sign in all five seeds. PGD-10 shows the same direction at every budget: $+5.58$, $+8.99$, $+9.94$, and $+5.02$ points from $1/255$ through $8/255$, each with a positive hierarchical interval. On BloodMNIST and DermaMNIST, where strong attacks are already near saturation, the corresponding strong differences are $+0.94$ and $+0.63$ points and their intervals include zero.

\begin{figure}[!t]
\centering
\includegraphics[width=.92\linewidth]{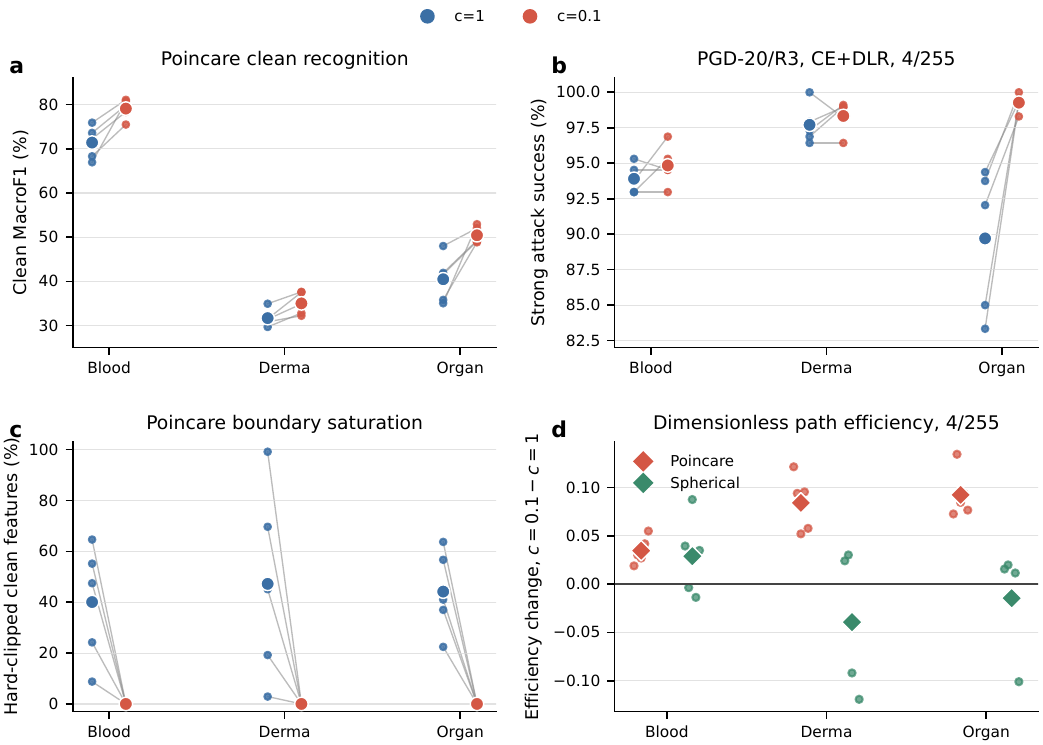}
\caption{End-to-end curvature intervention. Small points are seeds, large points/diamonds are five-seed means, and lines in (a-c) connect matched seeds. (a) Poincar\'e clean MacroF1. (b) Strong class-macro ASR. (c) Fraction of clean Poincar\'e features reaching the hard projection threshold. (d) Change in dimensionless PGD-10 path efficiency at $4/255$; positive values indicate a more direct trajectory at $c=0.1$. The spherical family is the curvature-gauge control.}
\label{fig:mechanism}
\end{figure}

\begin{table}[htbp]
\centering
\caption{Poincar\'e end-to-end curvature sensitivity. Radius and clipping entries are averages of seed-level clean diagnostics. Arrows run from $c=1$ to $c=0.1$.}
\label{tab:curvature}
\scriptsize
\begin{tabular}{lrrrr}
\toprule
Dataset & Clean MacroF1 & Strong ASR (\%) & Median $\rho_c$ & Hard clip (\%)\\
\midrule
Blood & .714$\rightarrow$.791 & 93.91$\rightarrow$94.84 & 1.000$\rightarrow$.981 & 40.1$\rightarrow$0\\
Derma & .317$\rightarrow$.350 & 97.71$\rightarrow$98.33 & 1.000$\rightarrow$.972 & 47.2$\rightarrow$0\\
Organ & .405$\rightarrow$.504 & 89.70$\rightarrow$99.27 & 1.000$\rightarrow$.943 & 44.2$\rightarrow$0\\
\bottomrule
\end{tabular}
\end{table}

The spherical control changes much less under the same intervention. Its strong ASR differences are $+0.16$, $+0.33$, and $+1.48$ points on Blood, Derma, and OrganAMNIST. Only the Organ interval is positive ($[+0.11,+3.07]$), and the magnitude is substantially smaller than for Poincar\'e. This family-specific contrast argues against explaining the result by the nominal manifold radius alone.

\subsection{Boundary occupancy, radial sensitivity, and the curvature gauge}
At $c=1$, mean seed-level median Poincar\'e radius rounds to $1.000$ on every dataset, and $40.1$-$47.2\%$ of clean features reach the hard projection threshold (Table~\ref{tab:curvature}). At $c=0.1$, clipping disappears in all 15 runs and representations move inward. Mean median radial Jacobians increase from $1.0\times10^{-5}$, $1.8\times10^{-5}$, and $3.3\times10^{-5}$ to $3.7\times10^{-2}$, $5.6\times10^{-2}$, and $1.11\times10^{-1}$ on Blood, Derma, and Organ, respectively. The intervention therefore changes access to radial representation motion as well as clean fit.

Learned temperature does not compensate: Poincar\'e's mean active scale is $1.139$ at both curvatures, so $s/\sqrt c$ changes from $1.139$ to $3.603$. In a functional gauge audit, spherical rescaling changes logits by at most $1.43\times10^{-6}$ across 133,242 logits and never changes a prediction. The boundary-saturated Poincar\'e $1\rightarrow0.1$ transform instead reaches a maximum difference of $0.713$ and changes one prediction among 62,572 logits; the reverse transform changes none. Thus Eq.~\eqref{eq:gauge} is exact in ideal arithmetic but not at the inherited Poincar\'e boundary. Full directional errors appear in Appendix~\ref{app:gauge}.

\subsection{Dimensionless trajectories and attack validity}
Normalizing metric radius does not remove the trajectory signature. At $4/255$, Poincar\'e $c=1$ efficiencies are $0.120$, $0.123$, and $0.118$ on Blood, Derma, and Organ, versus $0.155$, $0.207$, and $0.211$ at $c=0.1$. Dimensionless $c=1$ paths are three to five times longer, and $40.4$-$57.9\%$ of their states are clipped; no $c=0.1$ state is clipped. Spherical changes are smaller and not directionally uniform (Fig.~\ref{fig:mechanism}d; Appendix~\ref{app:trajectory}).

All 60 curvature-comparison combinations have finite, non-zero clean input gradients. On OrganAMNIST, CE and DLR strengthen through 20 steps but retain the same curvature separation; DLR also contributes selected worst-margin candidates. These checks narrow the interpretation from generic gradient failure to the measured boundary regime, without making PGD a certificate (Appendix~\ref{app:attackvalidity}). A frozen-backbone control further shows that a Poincar\'e terminal distance alone does not reproduce the end-to-end ordering (Appendix~\ref{app:frozen}); representation adaptation is part of the effect.

\subsection{Frozen-backbone attribution control}
To separate terminal distance from learned representation adaptation, we freeze each selected linear BloodMNIST backbone and train only Euclidean, Poincar\'e, and spherical prototype heads. On the strict direct four-head pool, five-seed PGD-10 ASR at $4/255$ is $98.0\%$ (linear), $95.0\%$ (Euclidean prototype), $97.5\%$ (Poincar\'e), and $96.9\%$ (sphere). Poincar\'e is therefore not the slowest head on this common representation. Because several seed pools contain only 3-40 attacked images, this is an attribution control rather than a robustness ranking. Together with the end-to-end intervention, it rejects the strongest head-only interpretation: the $c=1$ ordering depends on interaction among the training objective, the adapted backbone, and the boundary-sensitive map (Appendix~\ref{app:frozen}).

\section{Discussion}
The $c=1$ result admits a tempting narrative - Poincar\'e representations fail more slowly - but the intervention makes it incomplete. Lower curvature improves every paired Poincar\'e clean result, removes saturation, restores radial sensitivity, and removes the large Organ attack gap. The same change is an almost exact numerical gauge for the sphere and has a much smaller adversarial effect. This connects near-boundary concentration in prototype learning~\citep{moreira2024twosides}, precision-dependent hyperbolic optimization~\citep{sala2018tradeoffs,mishne2023stability}, and the effects of radial clipping~\citep{guo2022clipped}: boundary occupancy here coincides with extinguished radial sensitivity and inefficient normalized attack paths.

The symmetry distinction is precise. Spherical normalization quotients every positive radial orbit, so the feature-loss gradient is tangent and attacks act by rotating direction. Poincar\'e geometry retains radius intrinsically, yet the implemented $c=1$ map often suppresses radial sensitivity at the numerical boundary. One is an exact architectural invariance; the other is a curvature-dependent finite-precision regime. Because $c=1$ and $c=0.1$ models are trained independently, the study establishes their coordinated changes but does not prove that clipping alone causes the ASR gap.

\section{Limitations and Broader Impact}
We study one compact architecture, three medical benchmarks, five seeds, and 50 examples per class. Results may change with full-data training, larger backbones, other Poincar\'e maps, or adversarial training. Shared-clean-correct evaluation excludes classes without jointly correct images, and the stabilized trajectory calculation need not duplicate the classifier's clamp sequence bit-for-bit. The attacks are strong first-order tests, not exhaustive or certified evaluations; ASR is high for every head at $4/255$. No configuration is clinically secure. The diagnostic lesson is that curvature and boundary numerics must be audited together before attributing failure to intrinsic geometry.

\section{Conclusion}
At $c=1$, Poincar\'e hZACH-ViT fails more slowly in every core cell, but pays a clean-recognition cost. At $c=0.1$, that cost, hard clipping, and the large Organ attack slowdown disappear, whereas the spherical gauge control changes much less. The supported claim is therefore narrower than ``hyperbolic geometry is robust'': \emph{curvature can place a learned Poincar\'e representation in a boundary-saturated numerical regime that jointly reorganizes clean recognition and adversarial motion}.

\section*{Reproducibility}
The experiments were conducted with Python 3.10.16, PyTorch 2.3.1+cu121, CUDA 12.1, and one NVIDIA GeForce RTX 3060 GPU. The code and supporting materials required to reproduce the reported analyses will be released publicly after NeurIPS 2026.

\clearpage
\bibliography{references}

\appendix
\section{Evaluation-pool coverage}\label{app:coverage}
The four-head $c=1$ core pools contain 454-512 images per BloodMNIST seed, 176-205 per DermaMNIST seed, and 423-704 per OrganAMNIST seed after the independent class cap. All BloodMNIST pools contain all eight classes. DermaMNIST consistently lacks class 3 in the strict four-head intersection. OrganAMNIST covers 8-11 classes depending on seed. The strong pools use an independent cap of 16 images per eligible class.

The within-family curvature pools are larger because they require agreement only between $c=1$ and $c=0.1$. Poincar\'e core pools contain 502-512 BloodMNIST images, 207-275 DermaMNIST images, and 554-703 OrganAMNIST images; their strong counterparts contain 128, 89-103, and 144-176 images, respectively. The spherical curvature pools contain all 11 OrganAMNIST classes in every seed. Exact per-class counts and selected indices are included in the supplementary package.

\section{Functional curvature-scale gauge audit}\label{app:gauge}
Features, prototypes, and temperature are transformed together according to Eq.~\eqref{eq:gauge}; logits and predictions are then recomputed through the implemented maps on the full clean test set. The directional asymmetry in Table~\ref{tab:gauge} concentrates in boundary-saturated $c=1$ Poincar\'e runs and is absent from the spherical control.

\begin{table}[htbp]
\centering
\caption{Curvature-scale gauge audit. Mean errors are weighted by the number of logits.}
\label{tab:gauge}
\scriptsize
\begin{tabular}{llrrrr}
\toprule
Family & Direction & Max $|\Delta\ell|$ & Mean $|\Delta\ell|$ & Pred. changes & Logits\\
\midrule
Sphere & $1\rightarrow0.1$ & $4.77\times10^{-7}$ & $7.27\times10^{-8}$ & 0 & 66,621\\
Sphere & $0.1\rightarrow1$ & $1.43\times10^{-6}$ & $2.06\times10^{-7}$ & 0 & 66,621\\
Poincar\'e & $1\rightarrow0.1$ & $7.13\times10^{-1}$ & $7.46\times10^{-3}$ & 1 & 62,572\\
Poincar\'e & $0.1\rightarrow1$ & $3.99\times10^{-1}$ & $5.52\times10^{-4}$ & 0 & 62,572\\
\bottomrule
\end{tabular}
\end{table}

\section{Strong-attack convergence and gradients}\label{app:attackvalidity}
Across the curvature comparison, the maximum observed zero-gradient fraction and non-finite-gradient fraction are both zero. Median clean input-gradient $L_2$ norms are larger at $c=0.1$: for Poincar\'e they average $0.272$ versus $0.234$ on Blood, $0.551$ versus $0.254$ on Derma, and $3.72$ versus $2.55$ on OrganAMNIST. DLR supplies the worst-margin candidate for a non-zero fraction of images in every dataset family, so the combined result is not a relabeling of CE-only PGD. The strong-attack convergence curves are presented in Fig.~\ref{fig:convergence}; the supplement contains every seed-level convergence record and selected loss family.

\begin{figure}[htbp]
\centering
\includegraphics[width=.70\linewidth]{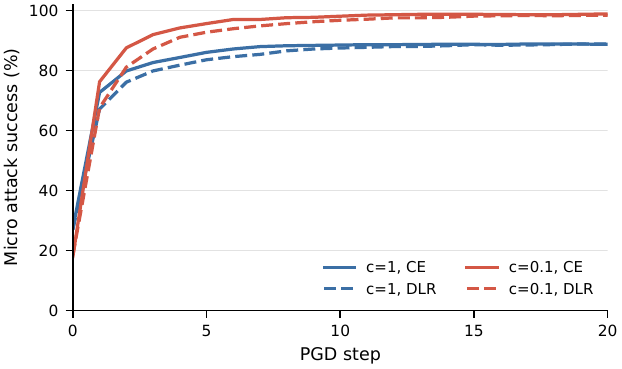}
\caption{OrganAMNIST Poincar\'e convergence under the first restart of the strong attack, averaged over five seeds. CE and DLR strengthen through 20 steps while preserving the $c=1$ versus $c=0.1$ separation. Final ASR uses the worst true-class margin over all six CE/DLR restart candidates.}
\label{fig:convergence}
\end{figure}

\section{Frozen-backbone attribution control}\label{app:frozen}
For each BloodMNIST seed, we froze the validation-selected linear ZACH-ViT feature extractor and trained only Euclidean, Poincar\'e, and spherical prototype heads. On the strict direct four-head shared-clean-correct pool, the five-seed mean PGD-10 ASRs at $4/255$ are 98.0\% (linear), 95.0\% (Euclidean prototype), 97.5\% (Poincar\'e), and 96.9\% (sphere). The pool is small in several seeds (3-40 attacked images), so this is an attribution control rather than a robustness ranking. It shows that a Poincar\'e terminal metric acting on the representation learned under the linear objective is insufficient to reproduce the end-to-end $c=1$ ordering.

In the same frozen representation, changing curvature from $c=1$ to $c=0.1$ changes Poincar\'e PGD-10 ASR by $-1.9$, $+0.9$, $0.0$, and $+0.9$ points across the four budgets; spherical differences are $+1.0$, $-1.8$, $-0.01$, and $-0.09$ points. The corresponding clean MacroF1 nevertheless increases from .433 to .557 for Poincar\'e and from .542 to .591 for sphere. Clean curvature selection and perturbation dynamics therefore do not share a fixed ordering on this common frozen representation.

\section{Dimensionless trajectory summary}\label{app:trajectory}
\begin{table}[htbp]
\centering
\caption{Five-seed mean PGD-10 trajectory quantities at $4/255$. Path and endpoint use $\sqrt c d_c$. ``Clip steps'' is the fraction of trajectory states at the Poincar\'e hard projection threshold.}
\label{tab:trajectory}
\scriptsize
\begin{tabular}{llrrrr}
\toprule
Dataset & Configuration & Endpoint & Path & Efficiency & Clip steps\\
\midrule
Blood & Poincar\'e $c=1$ & 24.33 & 205.70 & .120 & .404\\
      & Poincar\'e $c=0.1$ & 8.67 & 57.82 & .155 & .000\\
Derma & Poincar\'e $c=1$ & 24.14 & 199.55 & .123 & .449\\
      & Poincar\'e $c=0.1$ & 7.71 & 41.31 & .207 & .000\\
Organ & Poincar\'e $c=1$ & 23.97 & 215.27 & .118 & .579\\
      & Poincar\'e $c=0.1$ & 7.38 & 58.24 & .211 & .000\\
\midrule
Blood & Sphere $c=1$ & 1.38 & 4.20 & .431 & -\\
      & Sphere $c=0.1$ & 1.36 & 3.61 & .460 & -\\
Derma & Sphere $c=1$ & 1.17 & 2.85 & .569 & -\\
      & Sphere $c=0.1$ & 1.23 & 3.00 & .530 & -\\
Organ & Sphere $c=1$ & 1.25 & 5.40 & .429 & -\\
      & Sphere $c=0.1$ & 1.22 & 4.98 & .414 & -\\
\bottomrule
\end{tabular}
\end{table}

\end{document}